\theoremstyle{remark}
\newtheorem{prop}{Proposition}
\definecolor{darkblue}{rgb}{0, 0, 0.5}
\title{Distance Is All You Need: Radial Dispersion for \\Uncertainty Estimation in Large Language Models}
\author{
Manh Nguyen\thanks{Corresponding Author}, Sunil Gupta, and Hung Le\\
Applied Artificial Intelligence Initiative, Deakin University, Australia\\
\texttt{\{manh.nguyen, sunil.gupta, thai.le\}@deakin.edu.au}
}
\begin{document}

\ifcolmsubmission
\linenumbers
\fi

\maketitle

\begin{abstract}

Detecting uncertainty in large language models (LLMs) is essential for building reliable systems, yet many existing approaches are overly complex and depend on brittle semantic clustering or access to model internals. We introduce \textbf{Radial Dispersion Score (RDS)}, a simple, training-free, fully model-agnostic uncertainty metric that measures the radial dispersion of sampled generations in embedding space. Specifically, given $N$ sampled generations embedded on the unit hypersphere, RDS computes the total $\ell_1$ distance from the empirical centroid, i.e., the mean embedding, providing a direct geometric signal of semantic variability. A lightweight probability-weighted variant further incorporates the model's own token probabilities when available, outperforming nine recent state-of-the-art baselines. Moreover, RDS naturally extends to effective per-sample uncertainty estimates that complement probability- and consistency-based methods while remaining lightweight for practical use. Across four challenging free-form question-answering datasets and four LLMs, our metrics achieve state-of-the-art hallucination detection performance, while remaining robust and scalable with respect to sample size and embedding choice. These results highlight the practical value of RDS and its contribution toward improving the trustworthiness of LLMs. Code is publicly available at \url{https://github.com/manhitv/RDS}.

\end{abstract}

\section{Introduction}
Large language models (LLMs) exhibit remarkable reasoning and generation capabilities, yet they frequently produce \textit{hallucinations}, fluent but factually incorrect or unsubstantiated outputs~\citep{ji2023survey, huang2025survey}. Detecting when a language model is uncertain remains essential for building reliable, trustworthy systems. Among existing approaches, uncertainty estimation is one of the most effective tools for hallucination detection.

Predictive entropy is the information-theoretic gold standard for quantifying uncertainty~\citep{malinin2020uncertainty}, but exact computation is infeasible due to the exponential output space. As a result, recent work relies on Monte Carlo sampling. For example, semantic entropy~\citep{kuhn2023semantic,farquhar2024detecting} and its extensions~\citep{lin2023generating, nikitin2024kernel} cluster sampled responses in an external embedding space and compute entropy over semantic equivalence classes. While effective, these methods suffer from two critical drawbacks: (i) accurate semantic clustering is inherently brittle, i.e., a single response can belong to multiple plausible clusters, and (ii) by operating exclusively on the \emph{generation} space, they discard the LLM's own probability estimates and thus ignore a rich source of epistemic uncertainty directly available from the model itself. More recently, \citet{nguyen2025probabilities} utilised these probabilities by introducing PRO, an approximation of predictive entropy using only the top-$K$ generation probabilities derived from negative log-likelihood. Although effective on open-weight models, PRO remains model-specific, requires a calibration set to select the optimal $K$, and cannot be applied to black-box LLMs.

A parallel line of research approximates differential entropy through geometric properties of hidden representations. EigenScore~\citep{chen2024inside} uses the trace of the covariance matrix (equivalently, the sum of its eigenvalues) of LLM internal states as a lightweight proxy. While computationally attractive, EigenScore is fundamentally \emph{model-specific} because internal states are inaccessible for black-box LLMs and even for some open-weight models. In addition, it saturates in high-uncertainty regimes, mapping a wide range of highly uncertain cases to the same maximal value and thus reducing sensitivity, especially when the output distribution exhibits antipodal modes (see Figure~\ref{fig:examples}). 

To tackle these problems, we propose \textbf{Radial Dispersion Score (RDS)}, a simple, training-free, and model-agnostic uncertainty metric with a clean geometric interpretation. RDS avoids semantic clustering, does not access hidden states, and requires no calibration or model internals. Given $N$ sampled generations embedded on the unit hypersphere using an external encoder, RDS is defined as the total $\ell_1$ radial dispersion from the empirical centroid (the mean embedding). Intuitively, it directly quantifies the aggregate angular spread of output generations: the larger the sum of distances from the centroid, the greater the overall semantic dispersion.
Theoretically, RDS lower-bounds the trace of the embedding covariance matrix and thus preserves a monotonic connection to the differential entropy of the (unknown) continuous output distribution. Unlike EigenScore, it remains highly discriminative in high-entropy, opposing-cluster regimes.

We further introduce a probability-weighted variant, RDS$_w$, which incorporates LLM token-level probabilities when available, thereby emphasizing the actual structure of the model's predictive distribution. This weighting preserves meaningful directional distinctions and shifts the centroid toward high-probability generations, yielding a more faithful estimate of the representative embedding. Geometrically, RDS$_w$ is exactly the 1-Wasserstein distance between the empirical distribution of generations and its barycenter, providing a geometric foundation for its stability and sensitivity.
RDS applies uniformly to both black-box and open-weight models, while RDS$_w$ is available for models exposing token-level probabilities. 
In summary, our contributions are twofold:
\begin{itemize}
    \item We introduce RDS and RDS$_w$, a family of simple, parameter-free, model-agnostic uncertainty estimators grounded in radial dispersion geometry, supported by formal theoretical analyses.
    \item Across four free-form QA datasets, four diverse LLMs, our metrics deliver state-of-the-art performance and demonstrate strong robustness and scalability.
\end{itemize}

\section{Related Work}

Uncertainty estimation methods for LLMs can be grouped into three main families. (1) \textit{Probability-based methods} include (average) negative log-likelihood remain surprisingly competitive~\citep{guerreiro2022looking,manakul2023selfcheckgpt}. PRO~\citep{nguyen2025probabilities} recently improved upon these by approximating predictive entropy using only the top-$K$ generation probabilities and an adaptive threshold to filter noisy samples. While effective on open-weight models, these approaches cannot be used with black-box APIs and often require calibration data. (2) \textit{Semantic entropy and its extensions}~\citep{kuhn2023semantic,farquhar2024detecting,lin2023generating,nikitin2024kernel,qiu2024semantic} cluster sampled responses in an external embedding space and compute entropy over clusters' entropy (or density). Despite strong performance, they inherit the brittleness of clustering and discard the LLM's own probability signal. (3) \textit{Geometric methods} such as EigenScore~\citep{chen2024inside}, use the covariance trace as a fast entropy proxy but require internal states and saturate in high-uncertainty regimes such as antipodal distributions. Geometric Volume~\citep{phillips2025geometric} is model-agnostic via convex-hull of archetypes but computationally heavy due to non-convex archetypal analysis and poor scaling of hull computation. Parallel work explores multi-model uncertainty via Jensen–Shannon divergence~\citep{kruse2025simple}, and hidden-state Monte-Carlo sampling~\citep{flue}.
Our method belongs to none of these families: it is geometric yet model-agnostic, leverages generation probabilities only optionally, and avoids clustering entirely, and is provably more sensitive than trace-based metrics.

\section{Preliminaries}\label{sec:prelim}

\paragraph{From Discrete to Continuous Entropy}

The information-theoretic gold standard for predictive uncertainty is the conditional entropy of the output distribution:
\begin{equation}
  \label{eq:pe_origin}
  H(Y|x) = -\sum_{y} p(y|x)\log p(y|x)
\end{equation}
where \(Y\) is the (discrete) output random variable, \(x\) is the input. A low predictive entropy indicates a sharply concentrated output distribution, whereas a high value indicates more diverse outputs.

Differential entropy is the continuous analogue of this quantity, obtained by replacing discrete probabilities with a density \(f(y|x)\). In the context of LLMs, such a density is induced by embedding sampled completions using either internal hidden states or an external encoder. 
Approximating the conditional distribution of embedding vectors as Gaussian \(Y|x\!\sim\!\mathcal{N}(\mu,\Sigma)\) yields a closed-form differential entropy \citep{zhouyin2021understanding}:
\begin{align}
    H_{\mathrm{de}}(Y|x)
    &= \tfrac{1}{2} \log \det(\Sigma) + \tfrac{d}{2}(\log 2\pi + 1) \\
    &= \tfrac{1}{2}\sum_{i=1}^d \log \lambda_i + C,
    \label{eq:diff_entropy}
\end{align}
where \(\lambda_i\) are the eigenvalues of \(\Sigma\), \(d\) is the embedding dimension, and \(C\) is an additive constant.

\paragraph{EigenScore as a Practical Proxy}
While differential entropy provides a principled measure, computing it requires estimating the full covariance matrix eigenspectrum. \citet{chen2024inside} introduces a computationally simple surrogate for differential entropy: the \emph{EigenScore}, defined as the trace of the covariance of the hidden representations:

\begin{align}
    \label{eq:eigenscore}
    \operatorname{EigenScore}(X) &= \operatorname{tr}(\Sigma) \\
    &= \frac{1}{N} \sum_{i=1}^N \left\| \mathbf{u}_i -\mathbf{\bar{u}} \right\|_2^2 \\
    &=\sum_{i=1}^N \lambda_i  \propto H_{\mathrm{de}}(X),
\end{align}
where \(\mathbf{u}_i\) denotes the embedding of the $i$-th sampled completion and \(\mathbf{\bar{u}} = \frac{1}{N}\sum_{i=1}^N \mathbf{u}_i\) is the sample mean embedding. EigenScore equals the average squared \(\ell_2\) distance from the centroid: it approaches \(0\) when embeddings collapse (low uncertainty) and increases with isotropic spread (high uncertainty).

\section{Methodology}\label{sec:method}

\begin{figure*}[t]
    \centering
    \includegraphics[width=\textwidth]{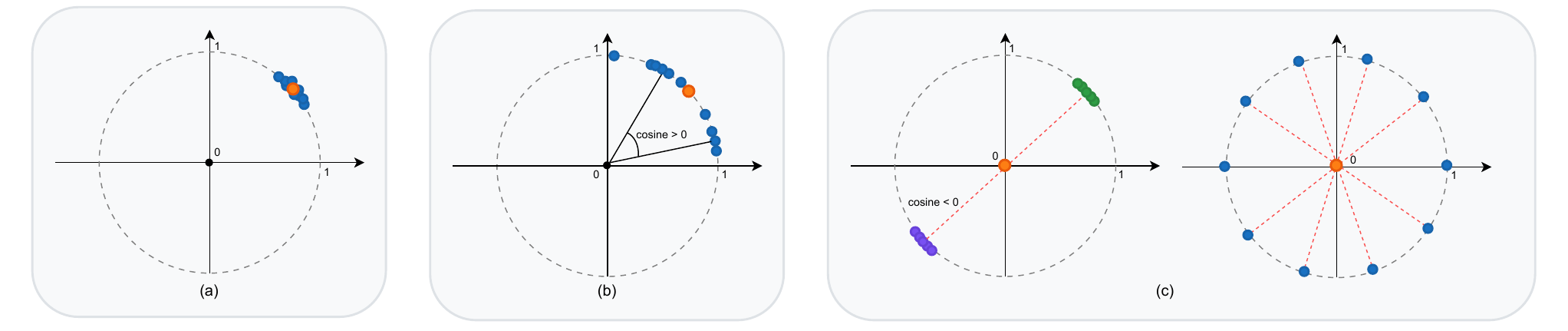}
    \caption{
    RDS vs. EigenEmbed across three uncertainty regimes illustrated using ten unit-norm 2D embeddings. Orange circles denote the empirical centroid (mean vector).
    (a)~Low uncertainty: all generations collapse into a tight semantic cluster; RDS $\approx$ EigenEmbed $\approx 0$.
    (b)~Hemispheric spread: embeddings are broadly dispersed but remain positively aligned with the mean (all cosines with the centroid > 0); Both metrics indicate moderate-to-high uncertainty (RDS $\approx \sqrt{N}$, EigenEmbed $\approx$ 0.8--0.9).
    (c)~Opposing-cluster regime: two or more tight clusters with negative inter-cluster cosines cancel the mean vector toward zero, causing EigenEmbed to saturate near 1 despite growing structural disagreement, whereas RDS scales with the magnitude of these opposing directions ($\ge \sqrt{N}$), making it substantially more sensitive and diagnostically useful for detecting high-uncertainty generations.
    }
    \label{fig:examples}
\end{figure*}

\subsection{Uncertainty Estimation via Radial Dispersion Score}\label{subsec:rds}

To estimate the uncertainty of an LLM's output given a prompt \(x\), we leverage the geometry of semantic embeddings sampled from the model's generations. Specifically, we generate \(N\) ($N>1$) sequences \(\{y_1, y_2, \dots, y_N\}\) conditioned on \(x\) using multinomial sampling. Each generation is embedded via an encoder $\mathbf{E}$ and then explicitly $\ell_2$-normalized to obtain unit-norm vectors $\mathbf{u}_i \in \mathbb{R}^d$:

\begin{equation}
    \mathbf{u}_i = \frac{\mathbf{E}(y_i)}{\|\mathbf{E}(y_i)\|_2}, 
    \qquad \|\mathbf{u}_i\|_2 = 1.
\end{equation}

These embeddings represent points on the unit hypersphere, where proximity reflects semantic similarity among output generations. High uncertainty manifests as dispersed embeddings (diverse plausible outputs), while low uncertainty yields clustered embeddings \citep{kuhnsemantic}. We quantify this dispersion simply using the \(\ell_1\)-norm dispersion from the centroid, termed Radial Dispersion Score (RDS), which is defined as:

\begin{equation}
    \label{eq:RDS}
    \operatorname{RDS}(x)
    = \sum_{i=1}^N 
        \left\|
            \mathbf{u}_i - 
            \mathbf{\bar{u}}
        \right\|_1.
\end{equation}
where \(\mathbf{\bar{u}} = \frac{1}{N}\sum_{i=1}^N \mathbf{u}_i\) denotes the empirical centroid of the sampled embeddings. Intuitively, RDS measures the total radial dispersion of each embedding from the centroid, with larger values indicating greater dispersion and, consequently, higher uncertainty, vice versa. 

We choose the $\ell_1$-norm over the $\ell_2$-norm for two reasons: (1) it provides a tighter upper bound on variance-based measures (as formalized in Proposition~\ref{prop:RDS_bound}, Section~\ref{sec:eigenscore}), and (2) it responds more strongly to large deviations in the embedding space~\citep{hastie2005elements}, making extreme dispersion easier to detect.

\paragraph{Probability-Weighted Variant}

Geometric dispersion alone overlooks variation in generation likelihoods. Prompt ambiguity, task difficulty, or knowledge gaps can make some outputs far more probable than others \citep{kuhn2023semantic, hou2024decomposing}, while generation probabilities have been shown to correlate with correctness \citep{kadavath2022language}. To incorporate this, we define a \textit{probability-weighted} variant that emphasizes dispersion among high-probability outputs while reducing the impact of low-probability, potentially noisy generations \citep{nguyen2025probabilities}. Let \(p_i = p(y_i|x)/\sum_j p(y_j|x)\) denote the normalized generation likelihood of each sequence, the weighted RDS at the prompt level is then defined as:

\begin{equation}
    \label{eq:wRDS}
    \operatorname{RDS}_w(x)
    = \sum_{i=1}^N p_i \left\|
        \mathbf{u}_i - \bar{\mathbf{u}}_w
    \right\|_1,
    \,
    \bar{\mathbf{u}}_w = \sum_{i=1}^N p_i \mathbf{u}_i.
\end{equation}

Here, the probability of each generation $p(y_i|x)$ is estimated empirically by Average Negative Log-Likelihood (ANLL) \citep{manakul2023selfcheckgpt}, which is widely used in prior work~\citep{guerreiro2022looking, manakul2023selfcheckgpt, nguyen2025probabilities}.

Beyond its geometric simplicity, \textbf{RDS is inherently robust and broadly applicable}: it relies solely on generated outputs rather than LLM internal states, operates in a lower-dimensional embedding space, and avoids model-specific architectural assumptions. This makes RDS easy to compute, lightweight in practice, and compatible with any black-box LLM. Weighted RDS further refines this signal when generation probabilities are available (e.g., grey-box or open-weight models), yielding a more faithful estimate of uncertainty by prioritizing dispersion among high-probability outputs. 

\subsection{Theoretical Analysis}\label{sec:eigenscore}

\subsubsection{RDS Connection to EigenScore}
To begin, we introduce \textbf{EigenEmbed}, defined as EigenScore computed from an \textit{external encoder} $\mathbf{E}$ rather than LLM internal hidden states. This allows fair comparison with RDS, which also operates on external embeddings. Figure~\ref{fig:examples} illustrates a 2D geometric intuition comparing RDS and EigenEmbed using ten unit-norm embeddings. In the coherent (a) and hemispheric-spread (b) regimes, both metrics agree. The most critical divergence occurs in the opposing-cluster regime (Figure~\ref{fig:examples}c), a prevalent failure mode in which LLMs produce overconfident yet contradictory generations. Here, multiple tight semantic clusters lie in roughly opposite directions on the hypersphere, resulting in a strong cancellation of the mean vector. As a result, EigenEmbed rapidly saturates and loses discriminative power, whereas RDS continues to scale with the true angular separation. We formalize this discrepancy below.

\begin{prop}
\label{prop:RDS_bound}
Let \(\{\mathbf{u}_i\}_{i=1}^N \subset \mathbb{R}^d\) be unit-norm embeddings (\(\|\mathbf{u}_i\|_2 = 1\)). Let the centered embeddings be \(\mathbf{v}_i = \mathbf{u}_i - \bar{\mathbf{u}}\). Then:

\begin{enumerate}[label={(\arabic*)}]
    \item $\operatorname{EigenEmbed} = \frac{1}{N}\sum_{i=1}^N \left\| \mathbf{v}_i\right\|_2^2 \in [0, 1]$.
    \item $\operatorname{RDS} \geq \operatorname{EigenEmbed}$.
\end{enumerate}
Equality in (2) holds if and only if all \(\mathbf{u}_i\) are identical. The gap becomes larger as $\bar{\mathbf{u}} \to \mathbf{0}$.
\end{prop}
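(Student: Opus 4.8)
The plan is to treat the two parts separately, both through elementary norm manipulations. For part~(1), I would expand the squared Euclidean norm of the centered embeddings: since $\mathbf{y}_i = \mathbf{x}_i - \bar{\mathbf{x}}$ and $\bar{\mathbf{x}} = \frac{1}{N}\sum_i \mathbf{x}_i$,
\[
\operatorname{EigenEmbed} = \frac{1}{N}\sum_{i=1}^N\bigl(\|\mathbf{x}_i\|_2^2 - 2\langle\mathbf{x}_i,\bar{\mathbf{x}}\rangle + \|\bar{\mathbf{x}}\|_2^2\bigr) = 1 - \|\bar{\mathbf{x}}\|_2^2,
\]
where the cross term collapses because $\frac{1}{N}\sum_i \mathbf{x}_i = \bar{\mathbf{x}}$ and each $\|\mathbf{x}_i\|_2 = 1$. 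Then I invoke the triangle inequality (equivalently, convexity of the norm) to get $0 \le \|\bar{\mathbf{x}}\|_2 \le \frac{1}{N}\sum_i\|\mathbf{x}_i\|_2 = 1$, which forces $\operatorname{EigenEmbed} = 1 - \|\bar{\mathbf{x}}\|_2^2 \in [0,1]$; the upper endpoint is attained when the centroid vanishes and the lower endpoint when all embeddings coincide.

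For part~(2), the argument is a term-wise chain of two inequalities. First, the standard comparison $\|\mathbf{v}\|_1 \ge \|\mathbf{v}\|_2$ applied to each $\mathbf{y}_i$ gives $\operatorname{RDS} = \sum_i\|\mathbf{y}_i\|_1 \ge \sum_i\|\mathbf{y}_i\|_2$. Second, I bound each centered vector: writing $\mathbf{y}_i = \frac{N-1}{N}\mathbf{x}_i - \frac{1}{N}\sum_{j\neq i}\mathbf{x}_j$ and applying the triangle inequality with $\|\mathbf{x}_j\|_2 = 1$ yields $\|\mathbf{y}_i\|_2 \le \frac{2(N-1)}{N}$, and since $\frac{2(N-1)}{N} < N$ for every integer $N \ge 2$ (this rearranges to $(N-1)^2 + 1 > 0$), we obtain $\frac{1}{N}\|\mathbf{y}_i\|_2^2 \le \|\mathbf{y}_i\|_2$, with \emph{strict} inequality unless $\mathbf{y}_i = \mathbf{0}$. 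Summing over $i$ and chaining:
\[
\operatorname{RDS} = \sum_{i=1}^N\|\mathbf{y}_i\|_1 \;\ge\; \sum_{i=1}^N\|\mathbf{y}_i\|_2 \;\ge\; \frac{1}{N}\sum_{i=1}^N\|\mathbf{y}_i\|_2^2 = \operatorname{EigenEmbed}.
\]

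The equality analysis then falls out of this chain. If $\operatorname{RDS} = \operatorname{EigenEmbed}$, both inequalities above must be equalities simultaneously; the strictness noted above forces $\|\mathbf{y}_i\|_2 = 0$ for every $i$, hence $\mathbf{x}_i = \bar{\mathbf{x}}$ for all $i$, i.e., all embeddings are identical. The converse is immediate, since both scores are then $0$. For the informal remark that the gap widens as $\bar{\mathbf{x}} \to \mathbf{0}$, I would observe that part~(1) caps $\operatorname{EigenEmbed} = 1 - \|\bar{\mathbf{x}}\|_2^2$ at $1$ regardless of dimension, whereas $\mathbf{y}_i \to \mathbf{x}_i$ forces $\operatorname{RDS} \to \sum_i\|\mathbf{x}_i\|_1 \ge \sum_i\|\mathbf{x}_i\|_2 = N$, a quantity that can scale like $N\sqrt{d}$; hence $\operatorname{RDS} - \operatorname{EigenEmbed}$ is maximized precisely in this antipodal/bimodal regime, consistent with Figure~\ref{fig:examples}. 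I anticipate no real obstacle here — the content is elementary — the only point demanding care is securing the \emph{strict} bound $\|\mathbf{y}_i\|_2 < N$ (rather than merely $\le N$), since that strictness is exactly what turns equality in~(2) into the statement that every centered vector vanishes; using the sharp constant $\tfrac{2(N-1)}{N}$ instead of the crude bound $2$ makes this transparent for all $N \ge 2$.
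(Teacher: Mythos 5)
Your proof is correct, and part (1) matches the paper's argument essentially verbatim: expand the centered norms, use $\sum_i \mathbf{x}_i = N\bar{\mathbf{x}}$ and $\|\mathbf{x}_i\|_2=1$ to get $\operatorname{EigenEmbed} = 1-\|\bar{\mathbf{x}}\|_2^2 \in [0,1]$. For part (2), however, you take a genuinely different route. The paper aggregates first: it bounds $\operatorname{RDS} \ge \sqrt{\sum_i\|\mathbf{y}_i\|_2^2} = \sqrt{N\cdot\operatorname{EigenEmbed}}$ and then finishes via $\sqrt{\operatorname{EigenEmbed}}\ge\operatorname{EigenEmbed}$ (from part (1)) and $\sqrt{N}\ge 1$, with the equality discussion resting on when the aggregate $\ell_1$-versus-$\ell_2$ step is tight. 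You instead argue termwise: $\|\mathbf{y}_i\|_1\ge\|\mathbf{y}_i\|_2$ together with the uniform bound $\|\mathbf{y}_i\|_2\le 2(N-1)/N < N$ (a direct consequence of the unit-norm hypothesis via your decomposition $\mathbf{y}_i=\tfrac{N-1}{N}\mathbf{x}_i-\tfrac1N\sum_{j\ne i}\mathbf{x}_j$), which yields $\|\mathbf{y}_i\|_2\ge\tfrac1N\|\mathbf{y}_i\|_2^2$ with strictness unless $\mathbf{y}_i=\mathbf{0}$. This buys a cleaner equality characterization, since strictness is localized to each sample and $\operatorname{RDS}=\operatorname{EigenEmbed}$ immediately forces every $\mathbf{y}_i=\mathbf{0}$, and it sidesteps the paper's somewhat loosely justified aggregate step $\sum_i\|\mathbf{y}_i\|_1\ge\sqrt{\sum_i\|\mathbf{y}_i\|_2^2}$. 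What you lose is the intermediate bound $\operatorname{RDS}\ge\sqrt{N\cdot\operatorname{EigenEmbed}}$, which the paper reuses in its extremal-case discussion to get $\operatorname{RDS}\ge\sqrt N$ when $\bar{\mathbf{x}}=\mathbf{0}$; on the other hand, your chain gives $\operatorname{RDS}\ge\sum_i\|\mathbf{y}_i\|_2$, which in that same regime yields the stronger conclusion $\operatorname{RDS}\ge N$, so your version of the ``gap widens'' remark is, if anything, sharper. Both arguments are elementary and both use the unit-norm assumption, just at different points in the chain.
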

\begin{proof}
See Appendix~\ref{app:proof_RDS_bound}.    
\end{proof}

This parameter-free lower bound is tight for any \(d \geq 1\) and $N \geq 2$. 

\paragraph{Extremal Case: \(\bar{\mathbf{u}} = \mathbf{0}\).} When the centroid is at zero, the embeddings form a perfectly \emph{balanced} configuration around the origin (Figure~\ref{fig:examples}c), giving \(\mathbf{v}_i = \mathbf{u}_i\). EigenEmbed then attains its maximum:
\[
\operatorname{EigenEmbed} = \frac{1}{N}\sum_{i=1}^N \|\mathbf{u}_i\|_2^2 = 1.
\]
For RDS, non-negativity of $\|\mathbf{v}_i\|_1 \, \forall i$ yields 
\[
\operatorname{RDS} = \sum_{i=1}^N \|\mathbf{v}_i\|_1
    \;\ge\;\sqrt{\sum_{i=1}^N \|\mathbf{v}_i\|_2^2}
    = \sqrt{N}.
\]

Thus, the two metrics diverge sharply in this regime, with RDS remaining responsive to the extent of semantic dispersion. Moreover, a zero centroid imposes further structural constraints on the embeddings as follows.

\begin{prop}
\label{prop:cosine_zero_centroid}
If the embeddings satisfy \(\bar{\mathbf{u}} = \mathbf{0}\), then the average pairwise cosine similarity is
\begin{equation}
    \frac{1}{N(N-1)} 
    \sum_{i \neq j} \mathbf{u}_i^\top \mathbf{u}_j
    = -\frac{1}{N-1}
    < 0.
\end{equation}
Consequently, for every \(i\), there exists at least one \(j \neq i\) such that 
\(\mathbf{u}_i^\top \mathbf{u}_j < 0\).
\end{prop}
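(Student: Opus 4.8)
The plan is to exploit the single algebraic identity obtained by expanding the squared Euclidean norm of the sum of the embeddings, which vanishes precisely because $\bar{\mathbf{x}} = \mathbf{0}$. First I would write $\mathbf{0} = N\bar{\mathbf{x}} = \sum_{i=1}^N \mathbf{x}_i$ and compute
\[
0 = \Bigl\| \sum_{i=1}^N \mathbf{x}_i \Bigr\|_2^2 = \sum_{i=1}^N \|\mathbf{x}_i\|_2^2 + \sum_{i \neq j} \mathbf{x}_i^\top \mathbf{x}_j .
\]
Since each embedding is unit-norm, $\sum_{i=1}^N \|\mathbf{x}_i\|_2^2 = N$, hence $\sum_{i\neq j} \mathbf{x}_i^\top \mathbf{x}_j = -N$. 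Dividing by the number of ordered pairs $N(N-1)$ (here we use $N \geq 2$) gives the stated average pairwise cosine similarity $-\tfrac{1}{N-1}$, which is strictly negative.

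For the consequence I would localize the same identity to a fixed index. Taking the inner product of $\mathbf{x}_i$ with $\sum_{j=1}^N \mathbf{x}_j = \mathbf{0}$ yields $\|\mathbf{x}_i\|_2^2 + \sum_{j \neq i} \mathbf{x}_i^\top \mathbf{x}_j = 0$, so $\sum_{j\neq i} \mathbf{x}_i^\top \mathbf{x}_j = -1 < 0$. A finite sum of real numbers that is strictly negative must contain at least one strictly negative summand, so there exists $j \neq i$ with $\mathbf{x}_i^\top \mathbf{x}_j < 0$; since $i$ was arbitrary, this holds for every $i$.

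There is essentially no hard step here — everything follows from one expansion of a squared norm — so the only things to be careful about are bookkeeping and logical order. Specifically, I would make sure to count \emph{ordered} pairs when normalizing, since that is the normalization in the statement, and I would emphasize that the per-index identity $\sum_{j \neq i} \mathbf{x}_i^\top \mathbf{x}_j = -1$, not the aggregate average, is what actually delivers the ``for every $i$'' conclusion: a negative average over all pairs would not on its own preclude some fixed row of inner products $\{\mathbf{x}_i^\top \mathbf{x}_j\}_{j \neq i}$ from being entirely nonnegative.
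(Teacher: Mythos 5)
Your proof is correct and follows essentially the same route as the paper: the average follows from expanding $\|\sum_i \mathbf{x}_i\|_2^2 = 0$ with unit norms, and the per-index conclusion from the identity $\sum_{j\neq i}\mathbf{x}_i^\top\mathbf{x}_j = \mathbf{x}_i^\top(-\mathbf{x}_i) = -1$, which the paper states as a contradiction argument and you state directly — a purely cosmetic difference. Your closing remark that the per-index identity, not the aggregate average, is what yields the ``for every $i$'' claim is a correct and worthwhile caution.
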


\begin{proof}
See Appendix~\ref{app:proof_zero_centroid}.    
\end{proof}

This negative-average structure is a \emph{geometric signature of semantic disagreement}: sampled generations are not merely diverse but actively opposed, forming antipodal or multi-cluster cancellation regimes, an explicit signature of semantic diversity and uncertainty. Such geometric configurations highlight regimes of maximal uncertainty, independent of the specific choice of dispersion metric.



\subsubsection{RDS$_w$ Connection to Optimal Transport}

The weighted RDS$_w$ coincides exactly with the $1$-Wasserstein distance (Earth Mover’s Distance) between the discrete probability measure
\[
\mu = \sum_{i=1}^N p_i \, \delta_{\mathbf{u}_i} \in \mathcal{P}(\mathbb{R}^d)
\]
and the Dirac measure at its own barycenter
\[
\nu = \delta_{\bar{\mathbf{u}}_w}, \qquad \bar{\mathbf{u}}_w = \sum_{i=1}^N p_i \, \mathbf{u}_i,
\]
when the ground cost is $c(\mathbf{a},\mathbf{b}) = \|\mathbf{a} - \mathbf{b}\|_1$.

Because the target measure \(\nu\) consists of a single Dirac mass, every feasible coupling must send the entire mass \(p_i\) located at \(\mathbf{u}_i\) directly to \(\bar{\mathbf{u}}_w\). Hence, the coupling set \(\Pi(\mu,\nu)\) contains a single deterministic plan, yielding the closed form~\citep{santambrogio2015optimal, peyre2019computational}:
\[
W_1(\mu,\nu)
= \sum_{i=1}^N p_i\,\|\mathbf{u}_i - \bar{\mathbf{u}}_w\|_1
= \operatorname{RDS}_w(x).
\]

This closed-form expression makes RDS$_w$ computationally trivial (no optimization required), while inheriting the geometric meaning and theoretical properties of the $1$-Wasserstein distance.

\section{Experimental Results}\label{sec:experiment}

\subsection{Experiment Setup}\label{sec:experiment-setup}


\paragraph{Datasets} We use four established benchmarks covering scientific QA and mathematical reasoning: (1) Scientific QA: SciQ \citep{welbl2017crowdsourcing}, and GPQA Diamond (GPQA) \citep{rein2024gpqa}, (2) Mathematical reasoning: Arithematics \citep{brown2020language}, and SVAMP \citep{patel2021nlp}. All benchmarks are experimented over full test sets. We provide the links to these datasets in Table~\ref{tab:data}, Appendix~\ref{app:data}.

\paragraph{Models} We evaluate on four popular instruction-tuned open-weight models from distinct families: Falcon3-7B~\citep{almazrouei2023falcon}, Gemma2-9B~\citep{team2024gemma}, Llama3.1-8B, and Llama3.2-3B~\citep{grattafiori2024llama}. We refer to them as Falcon3, Gemma2, Llama3.1, and Llama3.2.

\paragraph{Baselines} 
We compare our method against nine established uncertainty estimators: (1) Average Log Likelihood (ANLL) \citep{guerreiro2022looking}, (2) Negative Log Likelihood (NLL) \citep{aichberger2024rethinking}, (3) PRO \citep{nguyen2025probabilities}, (4) Semantic Entropy (SE) \citep{kuhn2023semantic}, (5) Degree (Deg) \citep{lin2023generating}, (6) Semantic Density (SD) \citep{qiu2024semantic}, (7) Self Consistency (SC) \citep{wang2022self}, (8) EigenScore (ES) \citep{chen2024inside}, and (9) EigenEmbed (EE), which is EigenScore computed on external embeddings. 

\begin{table*}[ht]
  \caption{
    AUC performance comparison across datasets and models. All values are reported in percentages. Best scores are bolded, second-best values are underlined. RDS$_{\ell_2}$ denotes our base metric computed using $\ell_2$-norm (Eq.~\eqref{eq:RDS}), included for reference. Our metrics (\textbf{RDS$_{\ell_2}$}, \textbf{RDS}, and \textbf{RDS{$_w$}}) are highlighted with shaded backgrounds. ES is unavailable (--) on Gemma2 because the Gemma family does not expose hidden states.
  }\label{tab:main_results}
  \centering
  \resizebox{\textwidth}{!}{
  \begin{tabular}{cccccccccccccc}
    \toprule
    \textbf{Dataset} & \textbf{Model} &
    \textbf{ANLL} & \textbf{NLL} & \textbf{PRO} & \textbf{SE} & \textbf{Deg} & \textbf{SD} & \textbf{SC} &
    \textbf{ES} & \textbf{EE} & \textbf{RDS$_{\ell_2}$} & \textbf{RDS} & \textbf{RDS$_w$} \\
    \midrule

    \multirow{4}{*}{GPQA}
    & Falcon3
      & \textbf{72.5} & 62.2 & 60.2 & 50.4 & 66.3 & 65.3 & 51.6 & 64.9 & 65.6 & \cellcolor{yellow!20}66.5 & \cellcolor{yellow!20}\underline{67.5} & \cellcolor{yellow!20}67.0 \\
    & Gemma2
      & 62.6 & 63.6 & \textbf{65.2} & 50.5 & 63.2 & 61.9 & 51.6 & -- & 63.0 & \cellcolor{yellow!20}62.5 & \cellcolor{yellow!20}63.0 & \cellcolor{yellow!20}\underline{63.7} \\
    & Llama3.1
      & 64.1 & 64.1 & 59.4 & 53.3 & 59.0 & 58.2 & 51.6 & 63.3 & 62.4 & \underline{64.7}\cellcolor{yellow!20} & \cellcolor{yellow!20}64.4 & \cellcolor{yellow!20}\textbf{66.0} \\
    & Llama3.2
      & 57.2 & 57.2 & 63.8 & 48.2 & 56.4 & 54.9 & 59.8 & 56.8 & 63.9 & \underline{64.5}\cellcolor{yellow!20} & \cellcolor{yellow!20}64.3 & \cellcolor{yellow!20}\textbf{67.1} \\

    \midrule

    \multirow{4}{*}{SciQ}
    & Falcon3
      & 60.0 & 57.5 & 62.1 & 57.0 & 70.2 & 69.4 & 69.1 & 62.4 &
        73.2 & \cellcolor{yellow!20}72.9 & \cellcolor{yellow!20}\underline{73.4} & \cellcolor{yellow!20}\textbf{74.2} \\
    & Gemma2
      & 59.9 & 59.3 & 68.2 & 59.0 & 72.5 & 72.4 & 74.0 & -- &
        74.1 & \cellcolor{yellow!20}75.1 & \cellcolor{yellow!20}\textbf{75.4} & \cellcolor{yellow!20}\underline{75.2} \\
    & Llama3.1
      & 64.4 & 64.4 & 57.6 & 63.7 & 75.2 & 73.8 & 76.5 & 56.4 &
        77.0 & \cellcolor{yellow!20}77.8 & \cellcolor{yellow!20}\underline{78.4} & \cellcolor{yellow!20}\textbf{78.8} \\
    & Llama3.2
      & 64.2 & 64.2 & 54.5 & 65.1 & 72.5 & 71.0 & 73.2 & 59.1 &
        73.2 & \cellcolor{yellow!20}75.0 & \cellcolor{yellow!20}\underline{75.1} & \cellcolor{yellow!20}\textbf{75.3} \\

    \midrule

    \multirow{4}{*}{Arithematics}
    & Falcon3
      & 70.0 & 76.7 & 77.0 & 83.3 & \textbf{89.9} & \underline{89.7} & 85.4 & 85.7 &
        83.6 & 83.3\cellcolor{yellow!20} & \cellcolor{yellow!20}85.3 & \cellcolor{yellow!20}86.6 \\
    & Gemma2
      & 49.6 & 49.6 & 56.4 & 83.4 & 84.7 & 84.1 & 83.2 & -- &
        82.6 & 84.7\cellcolor{yellow!20} & \cellcolor{yellow!20} \underline{85.1} & \cellcolor{yellow!20}\textbf{86.3} \\
    & Llama3.1
      & 71.3 & 71.3 & 56.8 & 84.7 & 87.5 & 87.9 & 86.3 & 64.4 &
        83.6 & 85.4\cellcolor{yellow!20} & \cellcolor{yellow!20}\textbf{88.7} & \cellcolor{yellow!20}\underline{88.4} \\
    & Llama3.2
      & 71.2 & 71.2 & 67.7 & 87.9 & 87.8 & 87.9 & \underline{88.8} & 58.4 &
        87.6 & 87.2\cellcolor{yellow!20} & \cellcolor{yellow!20}87.9 & \cellcolor{yellow!20}\textbf{89.0} \\

    \midrule

    \multirow{4}{*}{SVAMP}
    & Falcon3
      & 91.7 & 91.7 & 92.3 & 89.2 & 90.9 & 93.0 & 94.4 & 66.6 &
        93.4 & 94.0\cellcolor{yellow!20} & \cellcolor{yellow!20}\underline{94.7} & \cellcolor{yellow!20}\textbf{95.1} \\
    & Gemma2
      & 53.3 & 47.5 & 69.1 & 75.3 & 83.2 & 83.8 & \underline{84.9} & -- &
        82.8 & 82.9\cellcolor{yellow!20} & \cellcolor{yellow!20}83.7 & \cellcolor{yellow!20}\textbf{85.0} \\
    & Llama3.1
      & 57.9 & 57.9 & 68.9 & 84.3 & 86.7 & 87.8 & \textbf{92.4} & 64.4 &
        90.4 & 90.2\cellcolor{yellow!20} & \cellcolor{yellow!20}90.6 & \cellcolor{yellow!20}\underline{91.4} \\
    & Llama3.2
      & 62.1 & 62.1 & 57.7 & 80.3 & 80.0 & 79.3 & \underline{86.2} & 60.8 &
        84.8 & 84.4\cellcolor{yellow!20} & \cellcolor{yellow!20}84.6 & \cellcolor{yellow!20}\textbf{86.4} \\

    \midrule
    \textbf{Average} & &
      64.5 & 63.8 & 64.8 & 69.7 &
      76.6 & 76.3 & 75.6 &
      63.6 & 77.6 & 78.2\cellcolor{yellow!20} & \cellcolor{yellow!20}\underline{78.9} & \cellcolor{yellow!20}\textbf{79.7} \\
    \textbf{Best Count} & &  1 & 0 & 1 & 0 & 1 & 0 & 1 & 0 & 0 & 0\cellcolor{yellow!20} & \cellcolor{yellow!20}2 & \cellcolor{yellow!20}\textbf{10} \\
    \bottomrule
  \end{tabular}}
\end{table*}

\paragraph{Evaluation Protocol} Following \citet{kuhn2023semantic,qiu2024semantic,nguyen2025probabilities}, we measure the ability of each uncertainty score to separate correct from incorrect greedy generations using Area Under the ROC Curve (AUC, $\%$). 
A generation is deemed correct if it satisfies an exact match on reasoning tasks (Arithmetics, SVAMP) or achieves an ROUGE-L F1 score \citep{lin2004rouge} greater than 0.3 on QA tasks (SciQ, GPQA), consistent with prior work.

\paragraph{Sampling and Implementation} We sample $N{=}10$ completions per question using multinomial sampling at temperature $\tau{=}1$ via vLLM~\citep{kwon2023efficient}. All sampling-based baselines use these same $N{=}10$ generations (except ANLL/NLL, which use only the greedy output). For self-consistency, uncertainty is computed as $1 - {\text{count of majority answer}}/{N}$. For all baselines, we use the suggested hyperparameters following their original setup (PRO: $\alpha{=}0.4$, CE: $p{=}0.3$). Our RDS variants and EigenEmbed use the widely adopted \texttt{all-MiniLM-L6-v2} sentence transformer~\citep{reimers2019sentence}. All experiments run on a single H100-80GB GPU.

\subsection{Main Results}\label{sec:result}

Table~\ref{tab:main_results} summarizes AUC performance across all 16 model--dataset pairs.

\paragraph{Our methods consistently outperform all baselines across all settings.} On average, RDS$_w$ achieves the highest AUC of 79.7\%, followed by RDS and RDS$_{\ell_2}$ at 78.9\% and 78.2\%, surpassing the next best (EE) by a clear margin of 2.1, 1.3, and 0.6\%, respectively. 
The stable superiority of $\ell_1$–based variants (RDS and RDS$_w$) over their $\ell_2$ counterpart (RDS$_{\ell_2}$), especially on Arithmetics, further validates our design choice, fully consistent with the rationale in Section~\ref{subsec:rds}.
Notably, RDS$_w$ ranks first in 10 out of 16 settings, demonstrating remarkable robustness across diverse tasks and model architectures.

The superiority of our scores is particularly pronounced on mathematical datasets (Arithmetic and SVAMP), where the gap over EE widens to 3--5\% in several cases. This aligns with our theoretical analysis in Section~\ref{sec:eigenscore}: when generated answers exhibit high lexical diversity (common in free-form math solutions, where responses differing by even a single character or magnitude, e.g., ``1'' vs.\ ``10'', are semantically distant), the proposed RDS metric better captures the underlying semantic clustering than eigenvector-based methods.

Self-Consistency (SC) performs strongly on math problems (frequently ranking in the top-3 on Arithmetics and SVAMP), but its effectiveness drops considerably on datasets where exact-match evaluation is not applicable (GPQA and SciQ). This confirms that SC remains highly dependent on the availability of a verifiable, correct answer. Other baselines, such as SD and Deg show competitive results on specific tasks (mostly QA), but show limited performance on mathematics datasets. Simple probability-based methods (NLL, ALL) generally lag behind, underscoring the importance of modeling response similarity in the embedding space.
Overall, the proposed RDS and weighted variant RDS$_w$ maintain robustness across model families and dataset characteristics.

\subsection{Ablation Study}\label{sec:ablation}

In this section, we investigate the effects of hyperparameters, including the number of sampling responses $N$ and choice of embedding model $\mathbf{E}$. To reduce computational overhead, we use Falcon3-7B for experiments over two representative datasets: GPQA and SVAMP.

\begin{figure}[t]
    \centering
    \includegraphics[width=\linewidth]{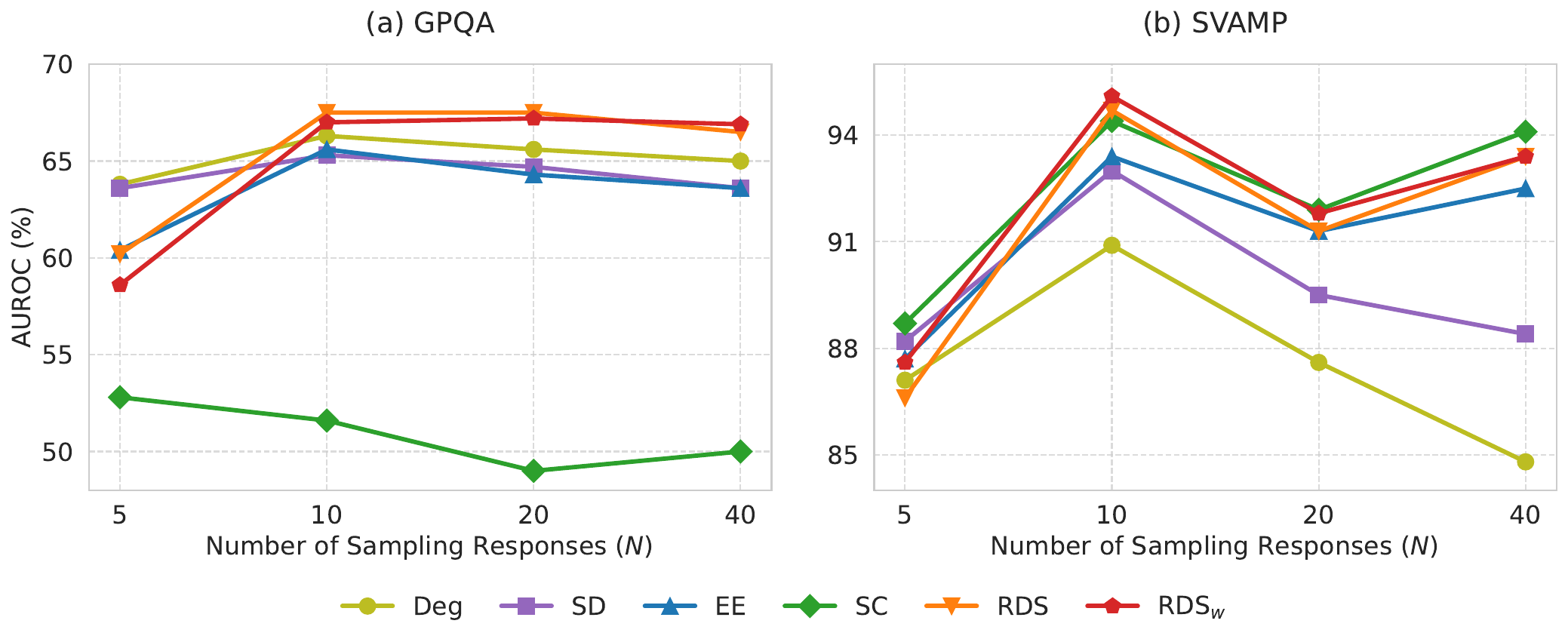}
    \caption{
    Ablation on the number of sampled responses $N$ for hallucination detection across datasets. Only top baselines are selected for illustration. Detailed results of all methods are provided in Appendix~\ref{app:sample-details}.
    }
    \label{fig:n-samples}
\end{figure}


\begin{figure}[t]
    \centering
    \includegraphics[width=\linewidth]{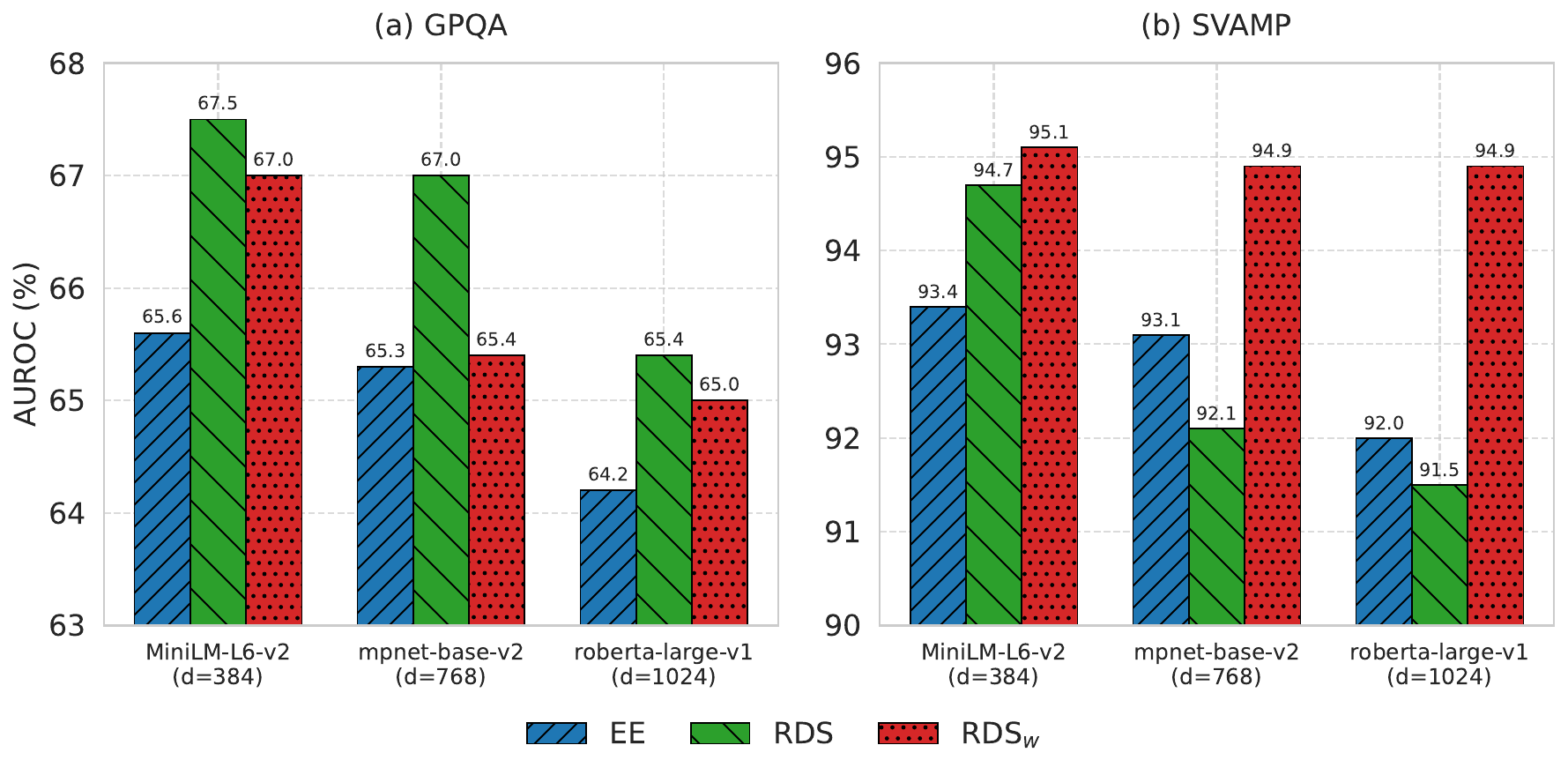}
    \caption{
    Effect of the sentence embedding model on hallucination detection across (a) GPQA and (b) SVAMP using $N{=}10$ .
    }
    \label{fig:ablation-embedding}
\end{figure}

\paragraph{Effect of the Number of Sampling Responses}

We vary $N \in \{5, 10, 20, 40\}$ using Falcon3-7B on GPQA and SVAMP (Figures~\ref{fig:n-samples}).
On GPQA, most semantic baselines (Deg, SD, EE) peak at $N{=}10$ and degrade as $N$ increases, suggesting that additional low-probability samples introduce noise. Self-consistency (SC) performs near random (approximately 50\%) at large $N$, as majority voting fails without a clearly dominant answer. In contrast, RDS-based methods remain robust: RDS$_w$ shows essentially no degradation and achieves the highest AUC even at $N{=}40$. On SVAMP, similar trends are observed, though variance is smaller and SC remains stable at high $N$, as correct mathematical solutions are often repeated verbatim. 
Overall, RDS-based metrics are uniquely tolerant to noisy generations, making them well-suited to large sampling budgets.

\paragraph{Effect Of Embedding Models}

In Figure~\ref{fig:ablation-embedding}, we compare three sentence transformers of increasing capacity: \texttt{all-miniLM-L6-v2} (384-d, our default), \texttt{all-mpnet-base-v2} (768-d), and \texttt{all-roberta-large-v1} (1024-d). On GPQA, all methods (EE, RDS, and RDS$_w$) degrade slightly (1.4–2.1\%) with larger encoders, consistent with observations that larger contrastively trained sentence transformers can exhibit embedding space anisotropy and representation biases, where certain semantic dimensions dominate the embeddings, leading to suboptimal performance on specialized QA domains~\citep{nikolaev-pado-2023-representation}. In contrast, RDS$_w$ remains highly stable on SVAMP, with AUC varying by at most $\pm 0.2$\%. Overall, these results indicate that incorporating generation probabilities reduces sensitivity to encoder choice while preserving most performance of the lightweight default model.

\section{Conclusion}

We propose Radial Dispersion Score (RDS), a simple, training-free, model-agnostic uncertainty estimator that measures radial dispersion of sampled generations, with an optional probability-weighted variant. Across four QA datasets and four LLMs, RDS achieves state-of-the-art hallucination detection performance, demonstrating robustness and scalability with respect to sample size and embedding choice.
Future work could extend RDS to broader applications that involve comparing and aggregating multiple candidate outputs, such as ranking, selection, and consensus estimation, enabling more general uncertainty-aware decision making.

\bibliography{colm2026_conference}
\bibliographystyle{colm2026_conference}

\clearpage
\appendix
\section{Appendix}\label{sec:app}

\subsection{Proof of Proposition \ref{prop:RDS_bound}}\label{app:proof_RDS_bound}

\begin{proof}

Let \(\mathbf{v}_i = \mathbf{u}_i - \bar{\mathbf{u}}\) denote the centered embeddings.

\paragraph{Step 1: EigenEmbed bounds}
We first compute the squared \(\ell_2\) norms of the centered embeddings:
\begin{align}
    \sum_{i=1}^N \|\mathbf{v}_i\|_2^2 &= \sum_{i=1}^N \|\mathbf{u}_i - \bar{\mathbf{u}}\|_2^2 \\
    &= \sum_{i=1}^N \big(\|\mathbf{u}_i\|_2^2 - 2 \mathbf{u}_i^\top \bar{\mathbf{u}} + \|\bar{\mathbf{u}}\|_2^2\big).
\end{align}

Since \(\|\mathbf{u}_i\|_2^2=1\) and \(\sum_i \mathbf{u}_i = N\bar{\mathbf{u}}\), the middle term simplifies, yielding
\[
\sum_{i=1}^N \|\mathbf{v}_i\|_2^2 = N - 2 N \|\bar{\mathbf{u}}\|_2^2 + N \|\bar{\mathbf{u}}\|_2^2 = N\big(1-\|\bar{\mathbf{u}}\|_2^2\big).
\]

Hence, by definition,
\[
\operatorname{EigenEmbed} = \frac{1}{N} \sum_{i=1}^N \|\mathbf{v}_i\|_2^2 = 1 - \|\bar{\mathbf{u}}\|_2^2 \in [0,1].
\]

\paragraph{Step 2: RDS lower bound}

By definition, the $\ell_1$ norm of each centered embedding is non-negative: $\|\mathbf{v}_i\|_1 \geq 0$ for all $i$, which implies
\begin{equation}
    \sum_{i=1}^N \|\mathbf{v}_i\|_1 \geq \sqrt{\sum_{i=1}^N \|\mathbf{v}_i\|_2^2}. \label{eq:l1-l2_origin_app}
\end{equation}

We can rewrite the right-hand side in terms of $\operatorname{EigenEmbed}$:
\begin{align}
    \operatorname{RDS} &= \sum_{i=1}^N \|\mathbf{v}_i\|_1 \\
    &\geq \sqrt{\sum_{i=1}^N \|\mathbf{v}_i\|_2^2} \label{eq:l2-eigen_app} \\
    &= \sqrt{N \cdot \operatorname{EigenEmbed}} \label{eq:eigen_score_app} \\
    &\geq \sqrt{N} \cdot \operatorname{EigenEmbed} \label{eq:eigen_property} \\
    &\geq \operatorname{EigenEmbed}. \label{eq:sample_property}
\end{align}

Eq.~\eqref{eq:eigen_property} and Eq.~\eqref{eq:sample_property} uses the fact that $\operatorname{EigenEmbed} \in [0,1]$ (step 1) and $N > 1$, respectively.
\end{proof}
\paragraph{Equality conditions}  
Equality in Eq.~\eqref{eq:l1-l2_origin_app} occurs if and only if all centered embeddings $\mathbf{v}_i=\mathbf{0} \ \forall i$, this corresponds to all original embeddings $\mathbf{u}_i$ being identical. The final inequality $\sqrt{N \cdot \operatorname{EigenEmbed}} \geq \operatorname{EigenEmbed}$ is strict for $N>1$, which ensures that $\operatorname{RDS} > \operatorname{EigenEmbed}$ unless the embeddings are identical (EigenEmbed = 0). Intuitively, the gap between RDS and EigenEmbed grows as the mean embedding $\bar{\mathbf{u}} \to \mathbf{0}$, reflecting greater dispersion among the embeddings.

\subsection{Proof of Proposition \ref{prop:cosine_zero_centroid}}\label{app:proof_zero_centroid}

\begin{proof}
Compute the squared norm of the total sum:
\[
\Big\|\sum_{i=1}^N \mathbf{u}_i\Big\|_2^2
= \sum_{i=1}^N \sum_{j=1}^N \mathbf{u}_i^\top \mathbf{u}_j.
\]
If \(\bar{\mathbf{u}} = \mathbf{0}\), then \(\sum_i \mathbf{u}_i = \mathbf{0}\) and the left-hand side is zero. Expanding the double sum separates diagonal and off-diagonal terms:
\[
0 = \sum_{i=1}^N \|\mathbf{u}_i\|_2^2 + \sum_{i\neq j} \mathbf{u}_i^\top \mathbf{u}_j
= N + \sum_{i\neq j} \mathbf{u}_i^\top \mathbf{u}_j.
\]
Hence \(\sum_{i\neq j} \mathbf{u}_i^\top \mathbf{u}_j = -N\), and dividing by the \(N(N-1)\) off-diagonal entries gives the stated average:
\[
\frac{1}{N(N-1)}\sum_{i\neq j} \mathbf{u}_i^\top \mathbf{u}_j = -\frac{1}{N-1} < 0.
\]

If for some fixed \(i\) we had \(\mathbf{u}_i^\top \mathbf{u}_j \ge 0\) for all \(j\neq i\), then summing would give
\[
\mathbf{u}_i^\top\Big(\sum_{j\neq i}\mathbf{u}_j\Big) \ge 0.
\]
But \(\sum_{j\neq i}\mathbf{u}_j = -\mathbf{u}_i\) (since the total sum is zero), so the left-hand side equals \(-\|\mathbf{u}_i\|_2^2 = -1\), a contradiction. Thus every \(i\) has at least one \(j\) with \(\mathbf{u}_i^\top \mathbf{u}_j<0\).
\end{proof}

\subsection{Extended Results: Number Of Samples}\label{app:sample-details}


\begin{table}[h!]
\centering
\caption{Detailed performance on GPQA and SVAMP using Falcon3-7B across different number of sampling responses.}
\label{tab:hallucination-full}
\begin{tabular}{c|ccccccccc}
\toprule
$N$ & PRO & SE & Deg & SD & ES & EE & SC & RDS & RDS$_w$ \\
\midrule
\multicolumn{10}{c}{\textbf{GPQA}} \\
\midrule
5 & 0.547 & 0.513 & 0.638 & 0.636 & 0.606 & 0.604 & 0.528 & 0.602 & 0.586 \\
10 & 0.602 & 0.504 & 0.663 & 0.653 & 0.649 & 0.662 & 0.516 & 0.675 & 0.670 \\
20 & 0.612 & 0.531 & 0.656 & 0.647 & 0.653 & 0.643 & 0.490 & 0.675 & 0.672 \\
40 & 0.603 & 0.525 & 0.650 & 0.636 & 0.669 & 0.636 & 0.500 & 0.665 & 0.669 \\
\midrule
\multicolumn{10}{c}{\textbf{SVAMP}} \\
\midrule
5 & 0.918 & 0.845 & 0.871 & 0.882 & 0.636 & 0.877 & 0.887 & 0.866 & 0.876 \\
10 & 0.923 & 0.892 & 0.909 & 0.930 & 0.666 & 0.942 & 0.944 & 0.947 & 0.951 \\
20 & 0.921 & 0.862 & 0.876 & 0.895 & 0.693 & 0.913 & 0.919 & 0.913 & 0.918 \\
40 & 0.920 & 0.817 & 0.848 & 0.884 & 0.693 & 0.925 & 0.941 & 0.934 & 0.934 \\
\bottomrule
\end{tabular}
\end{table}


\subsection{Model And Data Appendix} \label{app:data}
We list the links to the LLMs and datasets in Table~\ref{tab:data}. 

\begin{table*}[h]
\centering
\resizebox{\textwidth}{!}{
\begin{tabular}{l|l}
\midrule
\textbf{Models/Datasets} & \textbf{URL} \\
\midrule
Falcon3-7B & \url{https://huggingface.co/tiiuae/Falcon3-7B-Instruct} \\
Gemma2-9B & \url{https://huggingface.co/google/gemma-2-9b-it}\\
Llama3.1-8B & \url{https://huggingface.co/meta-llama/Llama-3.1-8B-Instruct} \\
Llama3.2-3B & \url{https://huggingface.co/meta-llama/Llama-3.2-3B-Instruct}\\
SciQ & \url{https://github.com/launchnlp/LitCab/blob/main/sciq/test.txt} \\
GPQA & \url{https://huggingface.co/datasets/Idavidrein/gpqa} \\
Arithematics & \url{https://huggingface.co/datasets/EleutherAI/arithmetic/resolve/main/data/single_digit_three_ops.jsonl} \\
SVAMP & \url{https://huggingface.co/datasets/ChilleD/SVAMP} \\
\bottomrule
\end{tabular}}
\caption{Models and Datasets Details.}
\label{tab:data}
\end{table*}

\end{document}